\newtheorem{prop}{\textbf{Proposition}}
\newtheorem{definition}{\textbf{Definition}}
\title{Model-based Dynamic Shielding for Safe and Efficient Multi-Agent Reinforcement Learning}
\author{Wenli Xiao}
\affiliation{
  \institution{The Chinese University of Hong Kong, Shenzhen}
  \institution{Shenzhen Institute of Artificial Intelligence and Robotics for Society}
  \country{China}
  }
\email{wenlixiao@link.cuhk.edu.cn}
\author{Yiwei Lyu}
\affiliation{
  \institution{Carnegie Mellon University}
  \country{United States}
  }
\email{yiweilyu@andrew.cmu.edu}
\author{John Dolan}
\affiliation{
  \institution{Carnegie Mellon University}
  \country{United States}
  }
\email{jdolan@andrew.cmu.edu}
\begin{abstract}
Multi-Agent Reinforcement Learning (MARL) discovers policies that maximize reward but do not have safety guarantees during the learning and deployment phases. Although shielding with Linear Temporal Logic (LTL) is a promising formal method to ensure safety in single-agent Reinforcement Learning (RL), it results in conservative behaviors when scaling to multi-agent scenarios. Additionally, it poses computational challenges for synthesizing shields in complex multi-agent environments. This work introduces Model-based Dynamic Shielding (MBDS) to support MARL algorithm design. Our algorithm synthesizes distributive shields, which are reactive systems running in parallel with each MARL agent, to monitor and rectify unsafe behaviors. The shields can dynamically split, merge, and recompute based on agents' states. This design enables efficient synthesis of shields to monitor agents in complex environments without coordination overheads. We also propose an algorithm to synthesize shields without prior knowledge of the dynamics model. The proposed algorithm obtains an approximate world model by interacting with the environment during the early stage of exploration, making our MBDS enjoy formal safety guarantees with high probability. We demonstrate in simulations that our framework can surpass existing baselines in terms of safety guarantees and learning performance.

\end{abstract}
\keywords{Robotics; Multi-Agent Reinforcement Learning; Safety}
\newcommand{\BibTeX}{\rm B\kern-.05em{\sc i\kern-.025em b}\kern-.08em\TeX}
\begin{document}


\pagestyle{fancy}
\fancyhead{}


\maketitle

\section{Introduction}
Multi-Agent Reinforcement Learning (MARL)~\cite{MARL-survey-1, MARL-survey-2} is a promising approach to obtain learning control policies for multi-agent decision-making tasks such as transportation management~\cite{MARL-Application-transportation-1, MARL-Application-transportation-2}, motion control~\cite{MARL-Application-robot-1, MARL-Application-robot-2}, and autonomous driving~\cite{MARL-Application-car-1, MARL-Application-car-2, MARL-Application-car-3}. However, applying MARL methods in safety-critical autonomous systems (e.g., autonomous driving cars) can cause havoc due to the lack of formal safety guarantees. In addition, traditional MARL approaches with behavior penalties (i.e., giving a negative reward for unsafe actions) cannot ensure safety in practice~\cite{qin2021learning, MARL-shielding}. Therefore, there is a significant challenge to developing safe MARL systems that are provably trustworthy~\cite{MARL-survey-2, qin2021learning,lu2021decentralized,cai2021safe, MARL-shielding}.

\begin{figure}[t!]
    \centering
    \includegraphics[width=\linewidth]{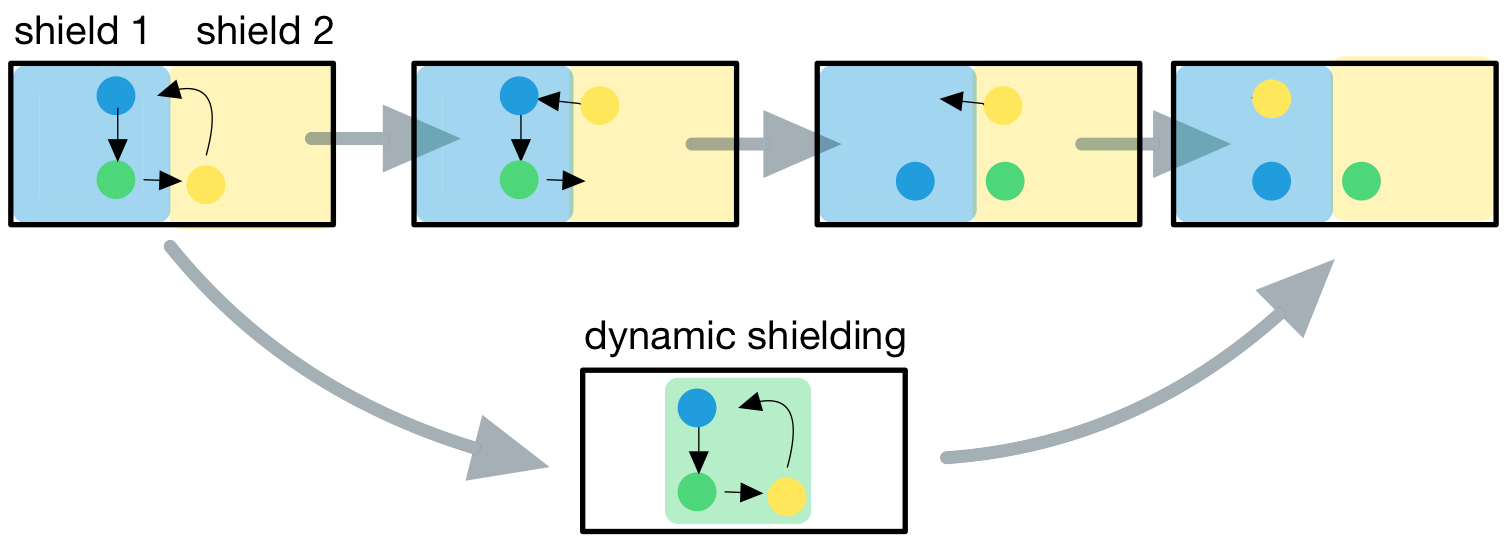}
    \caption{The differently colored circles denote multiple agents, and the black arrows are desired actions. Traditional decentralized shielding (upper) takes extra steps in waiting for coordination near the border of shields, while proposed dynamic shielding (lower) eliminates this overhead.}
    \label{fig:intro-conservative}
\end{figure}
Recently, there has been much research in notions of safety~\cite{MARL-safe-survey-1, safe-notation,RL-LTL-1,MARL-shielding,RL-shielding}. 
For example, Linear Temporal Logic (LTL)~\cite{LTL} is a specification language used for formal verification to ensure that an automation system always stays in safe states~\cite{rozier2011linear}. A recent work~\cite{RL-shielding} adopts LTL as a safety specification language in single-agent Reinforcement Learning (RL) via synthesizing a shield to monitor the RL agent. The shield is a lightweight system running along with the RL agent, which monitors actions selected by the RL agent and rejects any unsafe actions according to the given safety specification. The shield has provable safety guarantees for the lifetime of the RL process (i.e., the training and deployment phases). Factored shielding~\cite{MARL-shielding} adapts the shielded learning method to multi-agent scenarios in a decentralized fashion. Compared with centralized shielding, which uses one shield to monitor the states and actions of all agents, factored shielding synthesizes multiple shields, and each shield monitors a subset of the agents' state space. These methods perform well in discrete environments. However, both centralized shielding and factored shielding are challenging to scale up for more complex continuous environments.

On the other hand, when it comes to shielding framework design, there is a dilemma: centralized approaches have limited scalability~\cite{MARL-shielding}, while fully decentralized methods cause coordination overheads. Agents can become stuck waiting for coordination when they get closer to one another due to the lack of information sharing in decentralized approaches. For instance, Figure \ref{fig:intro-conservative} shows a scenario in which factored shielding causes extra coordination overhead. In this paper, we propose a novel, safe, and efficient MARL framework in a mixed decentralized manner, which dynamically synthesizes shields to mitigate those limitations. 

Specifically, our main contributions are threefold: Firstly, we propose a novel shield framework - \textit{dynamic shielding}, which enables robots to collaborate to ensure safety. There are initially multiple shields, which concurrently monitor different agents. When there is a high risk of conservative behavior (e.g., agents move together), the shields could choose to merge with others. The merged shield can leverage the state information of multiple agents to mitigate unnecessary coordination overhead. When agents move apart from each other, the merged shield can split into multiple shields. We also present an effective shield synthesis approach in section~\ref{sec:ssynthesis}, named \textit{k-step look ahead shields}. Our method prunes the unnecessary computation of traditional shield synthesis approaches~\cite{RL-shielding, MARL-shielding} and delegates the computation complexity to the online algorithm, which can synthesize shields in real-time. 
We also incorporate a world-model learning procedure to learn a simplified environment dynamics model. This enables our framework to learn from scratch, with minimal external knowledge.

Additionally, we showcase the effectiveness and performance of our shielding approach through extensive experiments. We study the navigation problem on six different grid world maps~\cite{MARL-env-1} and two different tasks in the Multi-Agent Particle Environment~\cite{mpe} (MPE). Our approach outperforms other baselines in terms of reward and minimal steps while guaranteeing safety. Furthermore, we show that dynamic shielding ensures safety with a high probability as the number of agents scales up.

\section{Related Work}

\subsection{Safe Multi-Agent Reinforcement Learning}
Safe RL methods can be classified into two categories~\cite{MARL-safe-survey-1}: 1) The first is optimization criterion-based methods, which modify the RL objective functions~\cite{wachi2020safe, chow2018lyapunov, thananjeyan2021recovery}. For example, SNO-MDP~\cite{cmdp-1} tackles the safe RL problem using a constrained Markov decision process. 2) The second is based on modifying the exploration process to avoid undesirable actions~\cite{bastani2021safe, li2020robust, jansen2020safe}, which incorporates extra domain-specific knowledge (e.g., and demonstration) into the training process. Our dynamic shielding algorithm falls into the second category. Shielding was introduced to RL in~\cite{RL-shielding}, and was adapted to multi-agent settings in~\cite{MARL-shielding}. In this work, we propose a novel shielding framework for MARL by addressing challenges such as coordination overhead and scalability issues in the multi-agent setting and mitigating the reliance on external knowledge.

\subsection{Safe Control via Control Barrier Functions}
Barrier certificates~\cite{prajna2007framework,luo2020multi} and Control Barrier Function (CBF)~\cite{wieland2007constructive} based control methods~\cite{chen2017obstacle,cheng2019end,taylor2020learning,lyu2021probabilistic,lyu2022adaptive,van2022provable} are commonly used to provide safety guarantee for safety-critical problems, such as collision avoidance~\cite{zeng2021safety,wieland2007constructive,prajna2007framework,lyu2022responsibility}. 
In the context of multi-agent collision avoidance, previous research has explored using the multi-agent CBF frameworks~\cite{borrmann2015control,wang2017safety,lyu2022responsibility}. Recent works ~\cite{chen2020guaranteed,qin2021learning} have proposed decentralized controller synthesis approaches under the CBF that can scale to an arbitrary number of agents. We acknowledge their contributions, but they are perpendicular to our focus.

In this work, we aim to address more general safety specifications for MARL by leveraging a more expressive Linear Temporal Logic (LTL)~\cite{LTL}. LTL can conveniently capture complex time-varying constraints~\cite{srinivasan2020control}. Although we conduct experiments for collision-avoidance tasks in section~\ref{sec:exp}, we focus on the use of LTL for MARL in this work, with the aim of extending our approach to even more complex safety constraints in the future.

\subsection{LTL as Safety Specification} 
LTL is a widely used specification language in safety-critical systems~\cite{LTL-app-1, LTL-app-2}, which can express complex requests at a high level. For example, LTL has been used to express complex task specifications for robotic planning and control~\cite{kress2009temporal,ulusoy2013optimality}. 
Several works~\cite{bozkurt2020control, hahn2019omega, hasanbeig2020cautious} develop reward shaping techniques that translate logical constraints expressed in LTL to reward functions for RL. However, ~\cite{MARL-shielding,qin2021learning} has empirically demonstrated reward shaping cannot ensure safety in MARL.
Our shield synthesis technique, which is based on solving two-player safety games, was originally developed in~\cite{konighofer2017shield} to enforce LTL specifications. The original technique synthesizes shields to local caches in an offline manner. In section~\ref{sec:ssynthesis}, we propose a novel online method to synthesize shields in real time.
\section{Preliminaries}

We start by introducing \textit{Multi-Agent Reinforcement Learning}, \textit{Shielding}, and \textit{Safety Games with Linear Temporal Logic specification}, upon which our algorithm builds.

\subsection{Multi-Agent Reinforcement Learning} 
We focus on the $n$-player Markov Games defined by a tuple $$\left(\mathcal{N}, \mathcal{S},\left\{\mathcal{A}^{i}\right\}_{i \in \mathcal{N}},\left\{r^{i}\right\}_{i \in \mathcal{N}}, \mathcal{P}, \mathcal{\gamma}\right)$$ where $\mathcal{N}=\{1...n\}$ is the set of $n$ agents, $\mathcal{S}$ denotes the state space jointly observed by all agents, $\mathcal{A}^{i}$ is the action space of agent $i$, $r^i$ is the reward function of agent $i$, $\mathcal{P}: \mathcal{S} \times \mathcal{A} \rightarrow \Delta(\mathcal{S})$ denotes the transition probability, and $\mathcal{\gamma}$ is the discount factor. We assume the initial state $s_1$ follows a fixed distribution $\rho \in \Delta(\mathcal{S})$. At each time step $t$, the agents observe state $s_t$, take actions $a_{t,i} \in A^i$ in the environment simultaneously, and receive rewards $r_{t,i} \in R^i$. Then the state of the environment moves to $s_{t+1}$. The objective of each agent $i$ is to learn a control policy $\pi_i$ which maximizes the expected cumulative reward $E\left[\sum_{t=0}^{\infty} \gamma^{t} R^{i}\left(s_{t}, a_{t}, s_{t+1}\right)\right]$. MARL algorithms can be categorized into three different types based on the dependence of individual agent performance on other agents' choices, including cooperative, competitive, and mixed settings. We use MARL algorithms with mixed settings in our experiment in Section~\ref{sec:exp}. CQ-learning~\cite{cqlearning} is a MARL algorithm that enables agents to behave separately at most of the time and consider the states and actions of other agents when necessary. MADDPG~\cite{mpe} is a deep MARL algorithm with centralized training and decentralized execution, each agent trains models that simulate each of the other agents' policies based on its observation of their behavior.

\subsection{LTL as Safety Specification}

We consider Linear Temporal Logic~\cite{LTL} (LTL) to express safety specifications. LTL is an extension of propositional logic, which has long been used as a tool in the formal verification of programs and systems. The syntax of LTL is given by the following grammar~\cite{LTL-syntax}:
$$
\varphi:=p|\neg p| \varphi_{1} \vee \varphi_{2}|\bigcirc \varphi| \varphi_{1} \mathcal{U} \varphi_{2}
$$
where $p$ is an atomic proposition. The temporal operators are next $\bigcirc \varphi$, which indicates $\varphi$ is true in the next succeeding state, and until $\varphi_{1} \mathcal{U} \varphi_{2}$ indicating $\varphi_1$ is true until the state where $\varphi_2$ is true. From these operators, we can define $True \equiv \phi  \vee \neg \phi$, $False \equiv \neg True$, implication $\varphi \Rightarrow \psi:=\neg \varphi \vee \psi$, eventually $\diamond \varphi:=\operatorname{True} \mathcal{U} \varphi$, and always  $\square \varphi:=\neg \diamond \neg \varphi$. We use LTL formulas to express safe specifications. For example, \textit{$\square \neg collision$} denotes that collision should never happen. We consider translating the LTL safety specification into a safe language accepted by a deterministic finite automaton (DFA)~\cite{dfa}. In addition, we extend the definition of safe RL in \cite{RL-shielding} to MARL in the following way:
\begin{definition}
    \label{def:safety}
  Safe MARL is the process of learning optimal policies for multiple agents while satisfying a temporal logic safety specification $\phi^s$ during the learning and execution phases.
\end{definition}

\subsection{Formal Safety Guarantee with Shield}

\begin{figure}[ht!]
    \centering
    \includegraphics[width=0.5\linewidth]{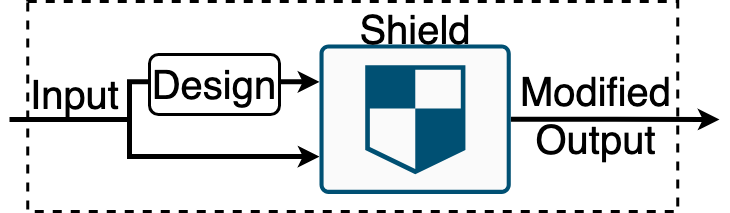}
    \caption{Enforce safety specification via shielding.}
    \label{fig:intro-shield}
\end{figure}

Our method builds upon a prior method called Shield~\cite{MARL-shielding, konighofer2017shield}, which ensures safety properties at runtime. A Shield (shown in Figure~\ref{fig:intro-shield}) monitors the control input of agents and corrects any unsafe control input instantaneously. A Shield should have two properties: 1) Minimal interference. Namely, shields only correct the action if it violates the safety rule. 2) Correctness. Shields should distinguish every unsafe action and refine it with safe actions. Our method uses the Shield framework to ensure safety, and we provide theoretical proof of safety in section~\ref{sec:ssynthesis}. 

We \textbf{represent the shield} using a finite-state reactive system. According to the formulation in ~\cite{MARL-shielding}, a finite-state reactive system is a tuple $S=\left(Q, q_{0}, \Sigma_{I}, \Sigma_{O}, \delta, \lambda\right)$, where $\Sigma_{I}$ and $\Sigma_{O}$ are the I/O alphabets, $Q$ is the state set, $q_0 \in Q$ denotes the initial state, $\delta : Q \times \Sigma_{I} \rightarrow Q$ is a transition function, and $\lambda : Q \times \Sigma_{I} \rightarrow \Sigma_O$ is an output function. Given the symbolic abstraction of the control input (i.e., input trace) $\overline{\sigma_{I}}=x_{0} x_{1} \ldots \in \Sigma_{I}^{\infty}$, the system S generates the trajectory of states (i.e., output trace) $\overline{\sigma_{O}}=\mathcal{S}\left(\overline{\sigma_{I}}\right)= \lambda\left(q_{0}, x_{0}\right) \lambda\left(q_{1}, x_{1}\right) \ldots \in \Sigma_{O}^{\infty}$, where $q_{i+1} = \delta(q_i, x_i)$ for all $i\geq 0$.

We \textbf{synthesize the shield} by solving a \textit{two-player safety game}~\cite{konighofer2017shield}, a game played by the MARL agents and the environment, where the winning condition is defined by the LTL safety specification. MARL agents should comply with all safety specifications all of the time in order to win the game. A two-layer game is a tuple $\mathcal{G}=\left(G, g_{0}, \Sigma_{I}, \Sigma_{O}, \delta,win\right)$ with a finite set of game states $G$, the initial state $g_0 \in G$, a complete transition function $\delta : G \times \Sigma_I \times \Sigma_O \rightarrow G$, and $win$ as a winning condition. In every state $g\in G$, the environment first chooses an input action $\sigma_I \in \Sigma_I$, and then the MARL agents choose a joint action (in abstraction symbol) $\sigma_O \in \Sigma_O$. Then the game moves to the next state $g' =\delta(g,\sigma_I,\sigma_O)$, and so forth. The resulting trajectory of game states $\bar{g} = g_0, g_1, ...$ is called a \textit{play}. A play is won if and only if $win(\bar{g})$ is $true$. We describe the detailed procedure of synthesizing shields via solving the two-player safety game in section~\ref{sec:ssynthesis}.

\begin{figure}[htb]
    \centering
    \includegraphics[width=0.7\linewidth]{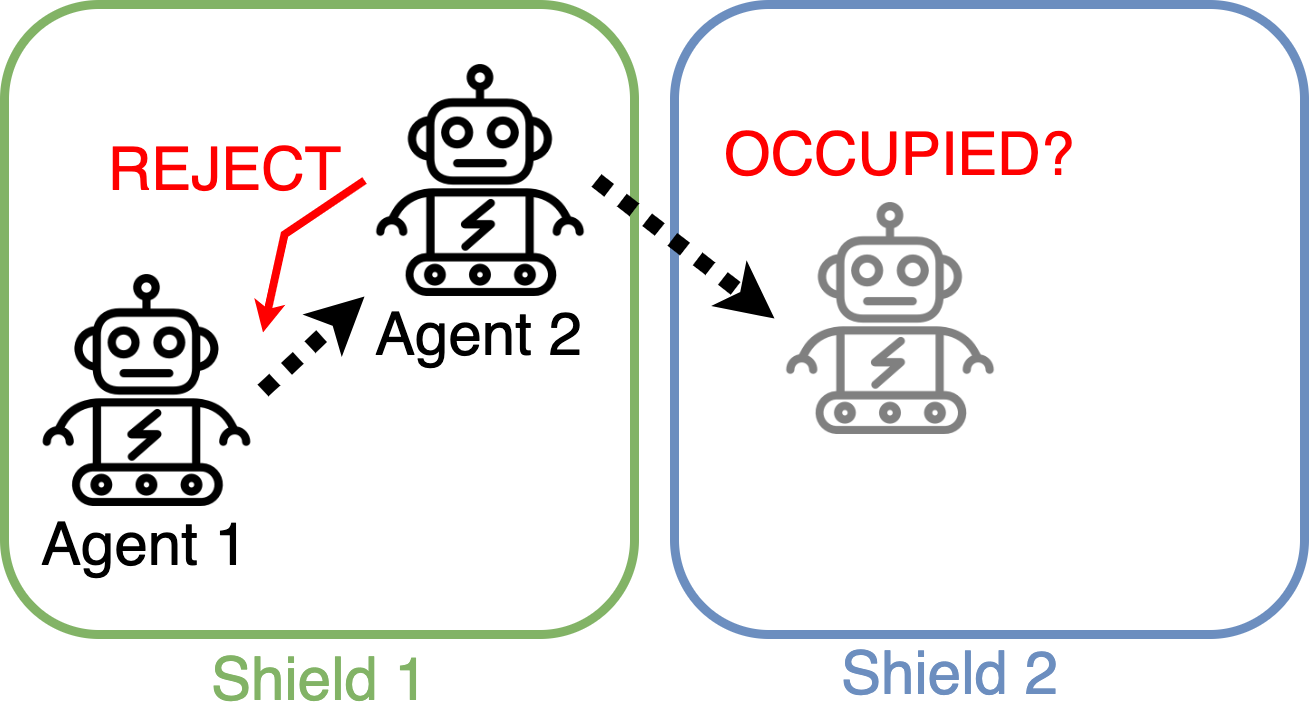}
    \caption{The green and blue squares denote shields, and the dashed arrows are desired actions of agents. There is no communication between shields 1 and 2. Therefore, Shield 1 conservatively judges that agent 2 cannot successfully enter shield 2, thus rejects Agent 1's action.}
    \label{fig:example-conservative}
\end{figure}

\begin{figure*}[htb]
    \centering
    \begin{subfigure}[t]{0.4\textwidth}
    \centering
    \includegraphics[width=\textwidth]{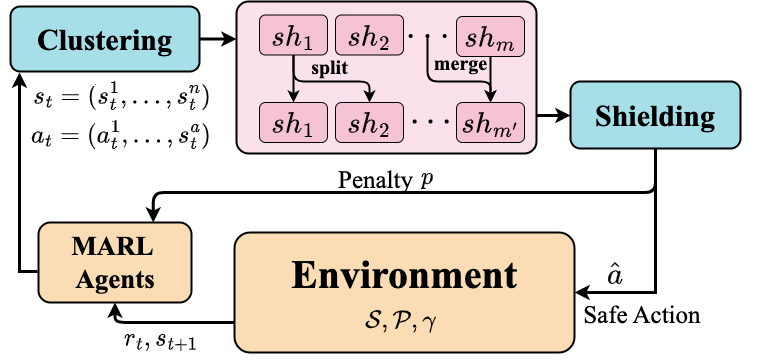}
    \caption{Learn control policies with the protection of Dynamic Shielding.}
    \label{fig:frame-dynamic-shielding}
    \end{subfigure}
    ~
    \begin{subfigure}[t]{0.4\textwidth}
    \centering
    \includegraphics[width=\textwidth]{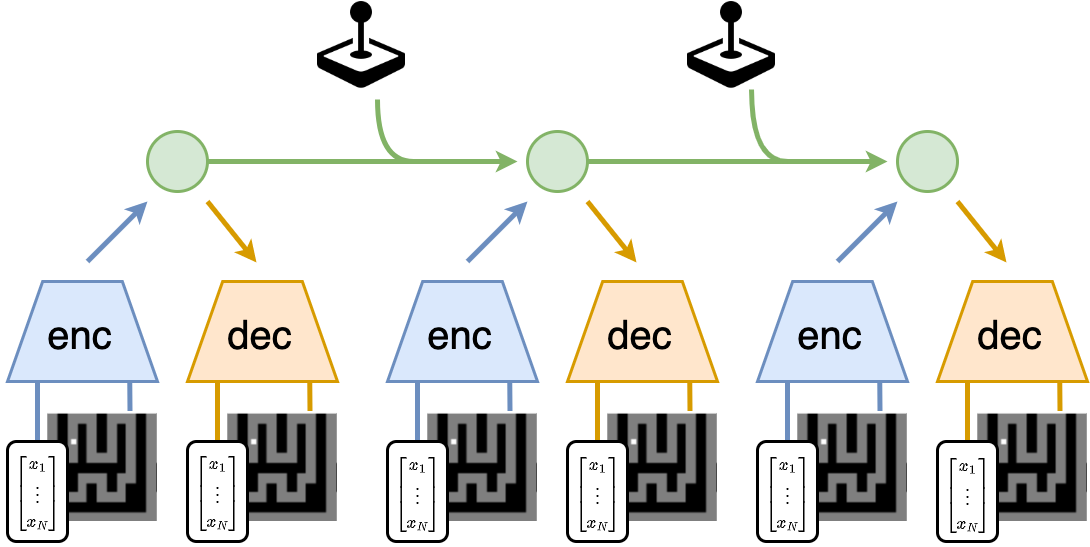}
    \caption{Learn Dynamics from experience.}
    \label{fig:frame-world-model}
    \end{subfigure}
    \caption{Dynamic Shielding Framework}
\end{figure*}

\section{Tackling Safe and Efficient Multi-agent Reinforcement Learning via Dynamic shielding}
\label{sec:dshield}

In this section, we first describe how traditional shielding methods cause sub-optimal learning. Then, we present our method for safe, optimal, and optimal MARL learning.

\subsection{Conservative Behavior and Coordination Overhead}
For multi-agent systems, centralized approaches always fail when the number of agents increases. For example, centralized shielding for MARL fails empirically for two-agent scenarios~\cite{MARL-shielding}. Fully decentralized shielding separates the whole state space into exclusive subspaces and synthesizes a shield to monitor a subspace. For example, factored shielding~\cite{MARL-shielding} computes multiple shields based on a factorization of the joint state space observed by all agents. However, this approach causes conservative behaviors (i.e., agents stuck in place) when agents move across the border of shields due to the information isolation between shields. Specifically, as shown in Figure~\ref{fig:example-conservative}, the shield would reject agents' actions even for those that are essentially valid. Consequently, the MARL system has higher coordination overhead, which causes extra steps when agents interact and render the MARL policy sub-optimal. In Section~\ref{sec:exp}, we empirically demonstrate that the coordination overhead caused by conservative behaviors leads to sub-optimal policies.

\subsection{Dynamic shielding}

\begin{algorithm}
    \SetKwFunction{isOddNumber}{isOddNumber}
    \SetKwFunction{reCompute}{reCompute}
    \SetKwInput{Input}{Input}
    \SetKwInput{Output}{Output}
    \SetKwInOut{KwIn}{Initialize}
    \SetKwInOut{KwOut}{Output}

    \KwIn{A list of shields $S = \{s_1, s_2, ..., s_m\}$, MARL agents' joint action $a_t = (a_t^1,a_t^2, ... a_t^n)$ and joint state $s_t = (s_t^1, s_t^2, ..., s_t^n)$, a constant penalty for unsafe actions $p$, a environment dynamics model $p(s_t | s_{t-1}, a_{t-1} )$}
    \KwOut{Safe joint action $\bar{a}_t$, punishment $p_t$, shield new\_shield}

    \textcolor{purple}{// Clustering: divide agents into groups} \\
    \textcolor{purple}{//  e.g., cluster agents by their position} \\
    
    $new\_shield = cluster\_agents(s_t, a_t)$

    \For{all group in $i \in \{1, ..., m'\}$}{
        \For{all shield $j \in \{, ..., m\}$}{
            \If{new\_shield[I].group == $s_j.group$ \\and $s_j$.duration $!=$ 0}{
                 new\_shield[i].recompute = False \\
                 new\_shield[i].shield = $s_j.shield$\\
            }
        }
    }
    \textcolor{purple}{// Re-construct shields} \\
    \textcolor{purple}{// 1. When agents trying to escape shields} \\
    \textcolor{purple}{// 2. When shields expire} \\
    
    \For{all group in $i \in \{1, ..., m\}$}{
        \If{$new\_shield[i].recompute == True $}{
            new\_shield[i].$\reCompute(p)$\\
        }
    }

    \textcolor{purple}{// Shielding: Replace unsafe actions to safe actions}\\
    $\bar{a}_t$ = safe action output by new\_shield 
    
    \For{all agent in $i \in \{1, ..., n\}$}{
        \If{$\bar{a}_t^i \neq a_t^i$}{
            $p_t^i$ = p\\
        }
    }
    \KwRet{$bar{a}_t$, $p_t$, new\_shield}
    \caption{Dynamic Shielding at timestep $t$}
    \label{algo:dshield}
\end{algorithm}

To mitigate the coordination overhead caused by conservative behaviors, we propose \textit{dynamic shielding}, a decentralized shield framework on top of the traditional MARL process. Dynamic shielding has two important features: 1) Dynamic shielding dynamically constructs shields based on agents' real-time states; 2) Dynamic shielding can perform two important operations, namely, \textit{merge} and \textit{split}. The merge operation uses multiple shields' information to construct a larger shield, which temporarily removes the border between shields. Therefore, the merged shield has enough information to distinguish whether joint actions are safe and eventually mitigate conservative behavior. On the other hand, the computation complexity in shield synthesis increases along with the shield size. The split operation helps decrease computation costs when agents locate sparsely. Figure~\ref{fig:frame-dynamic-shielding} shows the diagram of dynamic shield construction. Initially, we construct distinct shields for each agent, which monitor agents' reachable states in the next $k$ steps. If agents try to move to states outside the shield, the shield will recompute to establish a monitor on agents' future possible states. When agents gathering together has the possibility of collision, shields will merge to jointly monitor the action using the state information of multiple agents. When agents are more sparse, the merged shield will split to save computation. 

We summarize dynamic shielding in algorithm~\ref{algo:dshield}. There are three phases: 1) clustering, 2) shield reconstruction, and 3) shielding. In the clustering phase (LINE 1-12), the algorithm clusters agents into groups by their current state. For example, in robot navigation tasks, if some agents are close by, the algorithm will put them in the same group, otherwise in separate groups. Agents in the same group should merge shields to avoid conservative behaviors. Then, in the shield re-construction phase (LINE 13-20), shields will merge with other shields or split into multiple smaller shields based on the results of clustering. In addition, some expired shields might recompute according to agents' state change. The merge operation could be implemented by recompute shield using agents' aggregated state information. In the shielding phase (Lines 21-27), every shield will do shielding concurrently, which rejects agents' unsafe actions and replaces them with safety actions. Lastly, the MARL agents will be given an extra penalty for unsafe actions.

Our method faces the challenge it degrades to centralized shielding for edge scenarios (some timesteps). For example, when all agents gather together, all decentralized shields will merge together into a single centralized shield. We propose an online method of shield synthesis in Section~\ref{sec:ssynthesis}, which could efficiently synthesize shields.

\begin{algorithm}
    \SetKwInOut{KwIn}{Initialize}

    \KwIn{Initialize dataset $\mathcal{D}$.}
    \KwIn{Initialize neural network parameters $\theta$ randomly.}

    \While{\text{not converged}}{
        \textcolor{purple}{// Dynamics Learning} \\
        \For{\text{update step} $c=1...C$}{
            Draw B data sequences $\{\left(o_t,a_t\right)\}^{k+L}_{t=k}$\\
            Update $\theta$. \textcolor{purple}{// $p_\theta\left(o_t\mid o_{t-1},a_t\right)$} \\
        }
        \textcolor{purple}{// Collecting Data} \\
        $o_1 = \operatorname{env.reset}() $ \\
        \While{\text{episode not stopped}}{
            $a_t \sim A$ \\
            $o_{t+1} \leftarrow \operatorname{env.step}\left(o_t,a_t\right)$
        }
        \text{Add experience to dataset $\mathcal{D}$}
    }
    \KwRet{$\theta$}
    \caption{Learn dynamics model}
    \label{algo:world_model}
\end{algorithm}

\section{Synthesize shield in real-time}
\label{sec:ssynthesis}

In this section, we introduce the incorporation of world model learning and present our shield synthesis method -- \textit{k-step look ahead shields}, a variant of traditional shield synthesis~\cite{konighofer2017shield}. We also give theoretical proof to show that our method guarantees safety.

\subsection{Learn the environment dynamics}
To cope with the scenario that the MARL agents do not have external knowledge about the environment in the first place, our framework will train a coarse world model at the beginning. This world model is a deep neural network that learns to predict the environmental dynamics related to safety considerations. For example, an autonomous driving car could have different sensor inputs such as Lidar, Cameras, and GPS. If we only care about safety related to locomotion, the coarse world model will be trained to predict GPS signals under control inputs. There are many existing works on world model learning~\cite{RSSM,ha2018world,hafner2019learning}. We adapt a general framework from Recurrent State-Space Model (RSSM)~\cite{RSSM}, which consists of three components (Fig~\ref{fig:frame-world-model}): encoder, decoder, and dynamics networks, which are denoted by $\quad \operatorname{enc}_\theta\left(s_t \mid s_{t-1}, x_t\right)$, $\quad \operatorname{dec}_\theta\left(s_t\right) \approx x_t$, and $\quad \operatorname{dyn}_\theta\left(s_t \mid s_{t-1}, a_{t-1}\right)$ respectively.

With such a model obtained by Algorithm~\ref{algo:world_model}, given state $x_t$ and action $a_t$ with latent of previous state $s_{t-1}$, we can predict the next state $x_{t+1}$ by:
\begin{align*}
    s_t &\sim \operatorname{enc}_\theta\left(s_t \mid s_{t-1}, x_t\right)\\
    s_{t+1} &\sim \operatorname{dyn}_\theta\left(s_{t+1} \mid s_{t}, a_{t}\right) \\
    x_{t+1} &\sim \operatorname{dec}_\theta\left(s_{t+1}\right)
\end{align*}

In this work, we focus on using the world model to learn the low-dimensional intrinsic properties of the environment, such as physical dynamics, for shield synthesis. We assume the existence of an expressive world model, which allows us to abstract away from the details of the sensory input and reason about the environment at a higher level of abstraction. For this work, the cascading error that may arise from errors in the learned world model is outside the scope of our discussion.

\subsection{$k$\-step lookahead shields}
We assume the state space has been converted into a symbolic abstraction (via $f: S \rightarrow L$) given by a DFA $\mathcal{A}^{e}=\left(Q^{e}, q_{0}^{e}, \Sigma^{e}, \delta^{e}, F^{e}\right)$. We translate the LTL safety specification into another DFA $\mathcal{A}^{S}=\left(Q^{S}, q_{0}^{S}, \Sigma^{S}, \delta^{S}, F^{S}\right)$. We formulate a two-player game $$\mathcal{G}=\left(G, g_{0}, \Sigma_{1}, \Sigma_{2}, \delta^{g}, F\right)$$ by combining $\mathcal{A}^{e}$ and $\mathcal{A}^{S}$. Instead of solving the game $G$ directly, we add extra time constraints $t\leq k$, where $t \in $ denotes the time step from constructing the shield, and $k$ is a hyper-parameter that denotes the maximum steps of the game. The modified game is then 
\begin{equation}
\mathcal{G}^k=\left(G^k, g_{0}{'}, \Sigma_{1}, \Sigma_{2}, \delta^{g}{'}, F^k\right)
\end{equation}
where the state space $G^k = G \times \{1 ... k\}$, the initial state $g_{0}{'} = (g_0, t=1)$, the transition function $\delta^{g}{'}(g_t, t) = (\delta^{g}(g_t), t+1)$, which could be approximated through the world model, and the winning condition $F^k = F \wedge (t \leq k)$. We can solve the two-player safety game $\mathcal{G}^k$ 
and compute the winning region $W \subseteq F^{k}$, using the method in~\cite{konighofer2017shield}. We then construct the \textit{k-step look ahead shield} by translating $\mathcal{G}^k$ and $W$ to a reactive system $S=\left(Q_{\mathcal{S}}, q_{0, \mathcal{S}}, \Sigma_{I, \mathcal{S}}, \Sigma_{O, \mathcal{S}}, \delta_{\mathcal{S}}, \lambda_{\mathcal{S}}\right)$. The shield has the following components: $Q_{\mathcal{S}}= G^k$, $q_{0, \mathcal{S}}=q_{0}{'}$, $\Sigma_{I, \mathcal{S}}=L \times \mathcal{A}$, $\Sigma_{O, \mathcal{S}}=\mathcal{A}$, $\delta_{\mathcal{S}}(g^k,(l, a))=\delta\left(g^k,\left(l, \lambda_{\mathcal{S}}(g,(l, a))\right)\right)$ for all $g^k \in G, l \in$ $L, a \in \mathcal{A}$, and
$$
\lambda_{\mathcal{S}}(g, l, a)= \begin{cases}a & \text { if } \delta^k(g^k,(l, a)) \in W \\ a^{\prime} & \text { if } \delta^k(g^k,(l, a)) \notin W \text { for some arbitrary } \\ & \text { default } a^{\prime} \text { with } \delta^k\left(g^k,\left(l, a^{\prime}\right)\right) \in W .\end{cases}
$$

Our shield synthesis bears a resemblance to the classic shield synthesis~\cite{MARL-shielding, RL-shielding}, which also synthesizes shields by solving the two-player game. The main difference is that our method only predicts a subset of future state space, whereas previous methods enumerate all possible states along the planning horizon. This leads to the major benefit of our method, that for tasks with state spaces too large to compute in advance, our algorithm still works efficiently while previous methods fail.

\subsection{Safety Guarantee}
We show that dynamic shielding with \textit{k-step look ahead shielding} can guarantee safety for MARL agents.

\begin{prop}
Given a trace $s_{0} a_{0} s_{1} a_{1} \cdots \in (S \times A)^\omega$ jointly produced by MARL agents, the dynamic shielding, and the environment, state-action pair $(s_t, a_t)$ is safe at every time step regarding definition~\ref{def:safety}. 
\end{prop}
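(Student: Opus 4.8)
The plan is to argue by induction on the time step $t$ that two invariants hold along any trace $s_0 a_0 s_1 a_1 \cdots$ produced jointly by the MARL agents, the dynamic shielding layer, and the environment: (i) at every step, the game state $g^k_t$ (in whichever shield currently monitors agent/group containing the relevant agents) lies in the winning region $W$; and (ii) the symbolic abstraction $f(s_t)$ together with the shield's output action $\bar a_t$ keeps the product automaton $\mathcal{G}^k$ inside $F^k$, hence satisfies the LTL safety specification $\phi^s$ up to that step. The base case follows from the construction $q_{0,\mathcal{S}} = g_0' = (g_0, 1) \in W$, which is guaranteed whenever the game is realizable (we assume realizability of the safety game, as in~\cite{konighofer2017shield}). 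For the inductive step, I would invoke the definition of $\lambda_{\mathcal{S}}$: whichever action the MARL agent proposes, the shield outputs either $a$ (if $\delta^k(g^k_t,(l,a)) \in W$) or a default $a'$ with $\delta^k(g^k_t,(l,a')) \in W$; since $W$ is a winning region of a safety game it is closed under the shield-controlled transition, so $g^k_{t+1} \in W \subseteq F^k$, re-establishing (i) and (ii).

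The second ingredient is to handle the \emph{dynamic} aspect — merge, split, and recompute — which the classical single-shield argument does not cover. Here I would show that each of these operations preserves the invariant. For a \textbf{merge}, the recomputed shield is synthesized over the aggregated symbolic state of the participating agents using the same two-player safety game construction, and the current aggregated state is, by the inductive hypothesis, safe; hence it lies in $F^k$, and one checks it lies in the winning region of the merged game (any jointly reachable configuration of two safe sub-games that are each in their winning regions is winning in the product, because the safety winning region is exactly the set of states from which the controller can avoid the unsafe set forever, and this is preserved under taking products of safety conditions). For a \textbf{split} or a \textbf{recompute-on-expiry}, the new (smaller or fresh) shield is initialized at the current symbolic state with the clock reset to $t=1$; since that state is safe by hypothesis, it is in $F^k$, and realizability of the safety game from any reachable safe state — which we assume, mirroring the assumption in~\cite{RL-shielding, konighofer2017shield} — gives membership in the new winning region. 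Thus the invariant carries across reconstruction boundaries.

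Finally, I would connect the automaton-level invariant back to Definition~\ref{def:safety}: staying in $F^k \subseteq F$ at every step means the play of $\mathcal{G}$ (obtained from combining $\mathcal{A}^e$ and $\mathcal{A}^S$) never visits a state violating $\phi^s$, and by the correctness of the LTL-to-DFA translation this is equivalent to $(s_t, a_t)$ being safe with respect to $\phi^s$ for all $t$. The penalty term $p_t$ is irrelevant to safety (it only shapes reward) and can be ignored in the argument.

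\textbf{Main obstacle.} The delicate step is the treatment of merge/split under the $k$-step truncation: the truncated game $\mathcal{G}^k$ only guarantees safety for $k$ steps, so correctness hinges on the recompute/merge logic firing \emph{before} any shield's clock reaches $k$ (i.e., the \texttt{duration} check in Algorithm~\ref{algo:dshield}) and on the world-model-approximated transition $\delta^{g}{'}$ agreeing with the true environment transition — which the paper explicitly assumes (it brackets cascading world-model error as out of scope). I would therefore need to state cleanly, as a hypothesis of the proposition, that shields are recomputed at or before expiry and that the learned dynamics model is exact on the safety-relevant abstraction (or, for the high-probability version, correct with probability $1-\varepsilon$), and then the induction goes through; without that hypothesis the $t \le k$ horizon could silently lapse and the winning-region invariant would break.
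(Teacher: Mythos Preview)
Your proposal is essentially correct, but the route differs from the paper's in structure. The paper does \emph{not} run a step-by-step induction and does \emph{not} separately analyze merge, split, and recompute. Instead it simply observes that Algorithm~\ref{algo:dshield} guarantees every agent is covered by some shield whose horizon extends at least $k$ steps ahead (recomputing otherwise), then partitions the trace into segments of length $m\le k$ --- one per shield lifetime --- and on each segment \emph{reduces} to the already-established correctness of centralized shielding from~\cite{MARL-shielding}: projecting the segment onto $\mathcal{A}^S$ and invoking the winning-region construction shows every visited state is accepting in $\mathcal{A}^S$. That is the whole argument; no explicit invariant is carried across segment boundaries beyond the implicit assumption that each fresh synthesis succeeds from the current (safe) state.

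Your induction is a more explicit and arguably cleaner packaging of the same content, and you surface assumptions (realizability from every reachable safe state, recompute-before-expiry, exact world model on the safety-relevant abstraction) that the paper leaves implicit or explicitly brackets. One caution: your justification for the \textbf{merge} case --- that the product of two individual winning regions is contained in the winning region of the joint game --- is not valid in general for safety games with interacting components (two agents each individually winning can be jointly losing if their safe strategies conflict). You do not actually need this claim: merging, like split and expiry, is implemented by \emph{recomputing} the shield from scratch at the current aggregated state, so the same ``realizability from any reachable safe state'' assumption you already invoke for split/recompute suffices. If you replace the product-of-winning-regions sentence with that simpler observation, your argument is airtight and strictly more informative than the paper's.
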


\begin{proof}
Firstly, the procedure in algorithm~\ref{algo:dshield} ensures each agent is monitored by a shield at each time step, and this shield at least monitors the states of agents in the next $k$ steps (otherwise, the shield will re-compute). Then the remaining proof is the same as the correctness of centralized shielding in~\cite{MARL-shielding}. For any agents under shield $\mathcal{S}=\left(Q, q_{0}, \Sigma_{I}, \Sigma_{O}, \delta, \lambda\right)$, there is a corresponding run $q_0 q_1, ... q_m \in (S\times A)^\omega$, where $m \leq k$ is the duration before reconstructing this shield. By constructing the shield, we have the environment abstraction DFA $A^e$ and the safety specification DFA $A^s$. We can project the run $q_0,q_1, ...q_m$ of the shield $\mathcal{S}$ onto a trace $q_0^s(f(s_0),a_0)q_1^s(f(s_1),a_1) ...q_m^s(f(s_m),a_m) $ on $A_s$. Since we construct the shield from the winning region of the two-player safety game, every state $q_i^s(f(s_i)$ visited by agents along the trace should be a safe state in $\mathcal{A}^s$. The shield $\mathcal{S}$ ensures the safety specification defined in $\mathcal{A}^s$ is never violated. Therefore, the joint state-action pair $(s_t,a_t)$ is safe for every MARL agent at every step.

\end{proof}

\begin{figure}[htp]
    \centering
    \includegraphics[width=0.65\linewidth]{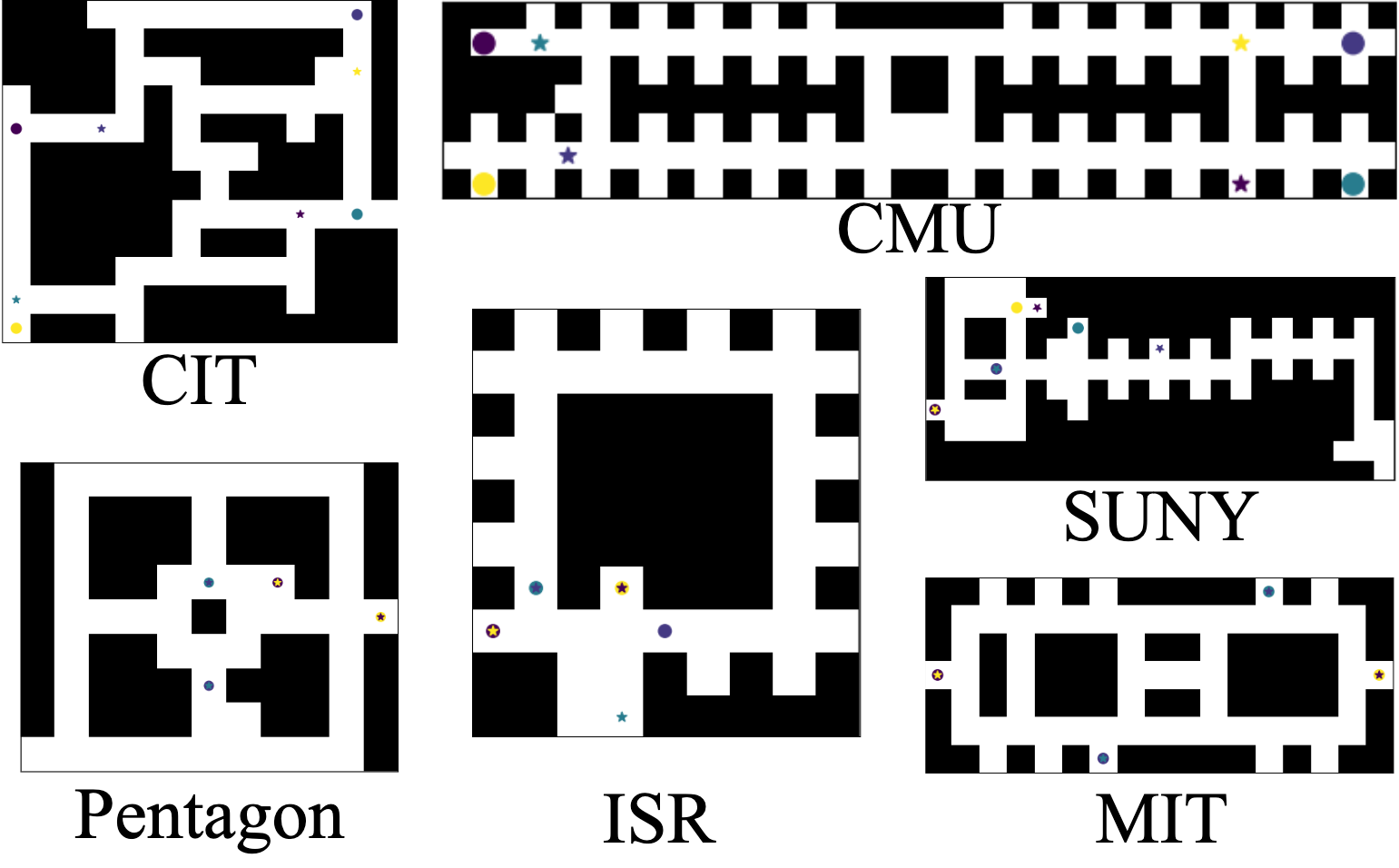}
    \caption{Different gridworld environments. Dots are agents, stars denote targets, and black blocks are obstacles.}
    \label{fig:exp_grid}
\end{figure}

\begin{figure}[htp]
    \centering

\begin{subfigure}[t]{0.5\linewidth}
    \centering
    \includegraphics[width=0.8\textwidth]{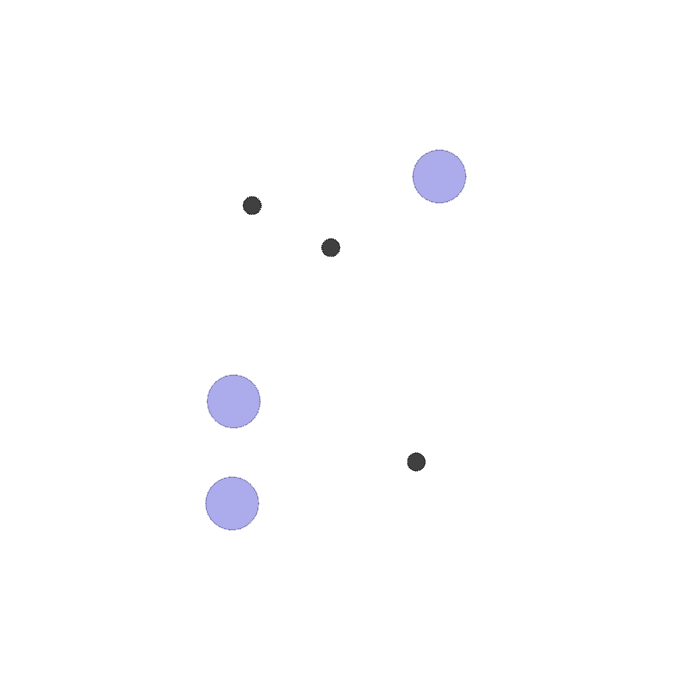}
    \caption{Simple Spread}
\end{subfigure}
~
\begin{subfigure}[t]{0.5\linewidth}
    \centering
    \includegraphics[width=0.8\textwidth]{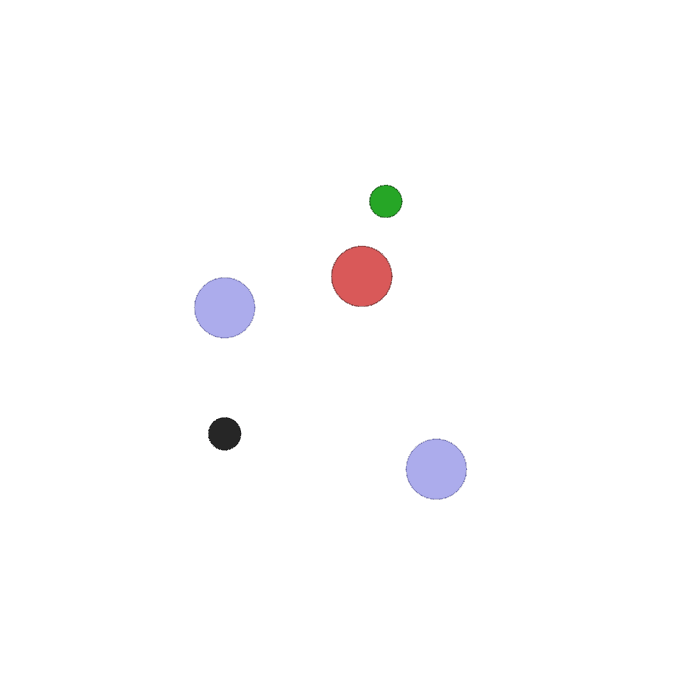}
    \caption{Simple Adversary}
\end{subfigure}
    
\caption{Modified Multi-Agent Particle Environment (MPE). The blue circles denote agents, black dots are landmarks. In \textit{Simple Adversary}, the red circle is an adversary agent, and the green dot is the target landmark, we let $n_{good}:n_{adversary}=3:1$. The MPE environment is unbounded, but agents will be penalized if they move too far away.}
\label{fig:exp_mpe}
    
\end{figure}

\section{Experiments}

\label{sec:exp}
In this section, we empirically evaluate the performance of the proposed safe MARL framework (Algorithm~\ref{algo:dshield}) on multiple benchmark tasks. We apply our algorithm to four different maps of the \textit{gridworld}~\cite{gridworld} (shown in Figure~\ref{fig:exp_grid}) and two environments (cooperative and mixed-cooperative of MPE (shown in Figure~\ref{fig:exp_mpe}). We compare the proposed algorithm with CQ-Learning~\cite{cqlearning}, CQ with factored shielding (CQ+FS), DDPG~\cite{ddpg}, MADDPG~\cite{mpe}, and MADDPG+CBF~\cite{cai2021safe}. We conducted three sets of experiments in total. For the first two sets, we assume known environment dynamics to evaluate the performance of our shielding framework conveniently. In the last experiment, we train our framework without any external knowledge of the environment to demonstrate the effectiveness of MARL with the proposed shielding in practice. 
We implement algorithms using Python and synthesize shields via Slugs~\cite{slugs}. For each experiment, we evaluate algorithms in both the training and testing phases. To mitigate outliers, we performed all experiments in 5 independent runs and averaged the results.

\begin{figure}[htp]
    \centering
    \begin{subfigure}[t]{\linewidth}
    \centering
    \includegraphics[width=0.8\textwidth]{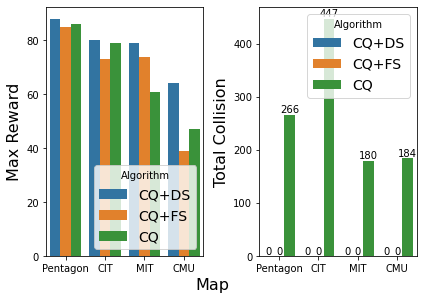}
    \caption{Left-hand side figure is the max reward that agents obtained during learning. The right-hand side figure is the total collision between agents during learning.}
    \label{fig:exp-grid-rew-col}
\end{subfigure} \\
\begin{subfigure}[t]{\linewidth}
    \centering
    \includegraphics[width=\textwidth]{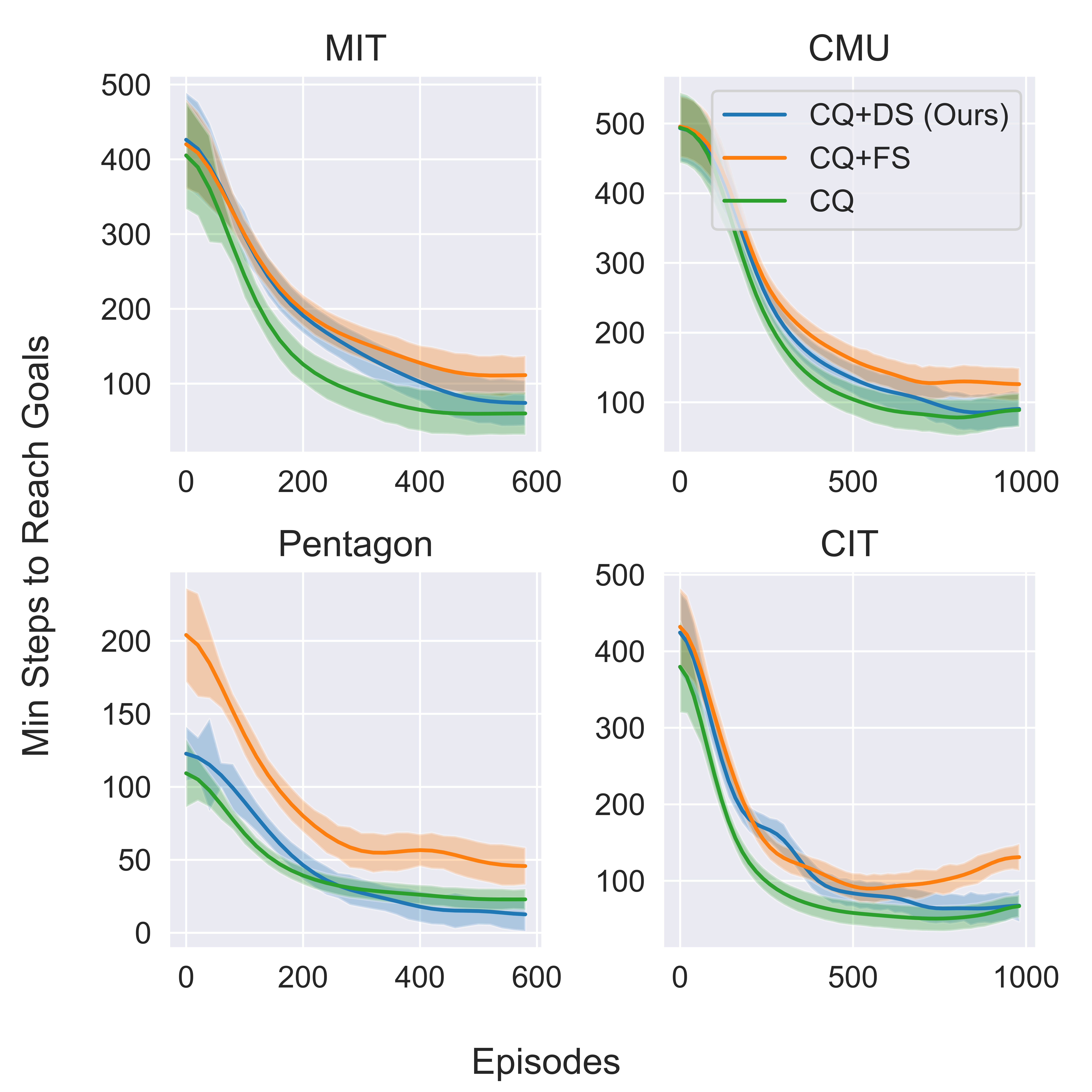}
    \caption{The \textit{Min step to reach goals} value is an indicator of the optimality of learned policies.}
    \label{fig:exp-grid-step}
\end{subfigure}

\caption{Gridworld experiment. \textit{CQ+DS}, \textit{CQ+FS}, and \textit{CQ} denote CQ-Learning with dynamic shielding, CQ-Learning with factored shielding, and CQ-Learning without shielding.}
    \label{fig:exp-grid}
\end{figure}

\textbf{Experiment Setup}. Figure~\ref{fig:exp_grid} shows six maps of grid world benchmark environments adapted from the \textit{gridworld}~\cite{gridworld}. Each map has four agents learning to navigate while avoiding obstacles in the environment. The action set is $\mathcal{A} = \{stay, up, down, left, right\}.$ We randomly assign a unique target to each agent. Once an agent reaches its target, it stays there until all agents reach their goals. We set sparse rewards for this task, namely, giving a $-1$ living penalty, $-10$ collision penalty, and $+100$ for reaching the target.

\begin{table*}
\centering
\begin{tabular}{cc cc cc cc cc cc} 
\toprule
\multicolumn{2}{c}{}          & \multicolumn{2}{c}{MADDPG+DS}         & \multicolumn{2}{c}{MADDPG+Safe}       & \multicolumn{2}{c}{DDPG+DS}           & \multicolumn{2}{c}{DDPG+Safe} & \multicolumn{2}{c}{MADDPG+CBF}    \\ 
\hline
Task                       & N  & REW        & COL             & REW        & COL        & REW        & COL             & REW        & COL             & REW        & COL         \\ 
\hline
\multirow{3}{*}{Spread}    & 4  & $-77 \pm 8$   & \textbf{0.0} & $-84 \pm 6 $  & $1.3 \pm 0.9$   & $-76 \pm 8$   & \textbf{0.0} & $-82 \pm 6 $  & $0.9 \pm 1.0$ & $-75 \pm 6$ & \textbf{0.0}   \\
                           & 8  & $-112 \pm 10$ & \textbf{0.0} & $-119 \pm 9 $ & $10.0 \pm 2.2$   & $-113 \pm 10$ & \textbf{0.0} & $-125 \pm 13$ & $7.7 \pm 1.9$ & $-118 \pm 12$ & \textbf{0.0}    \\
                           & 12 & $-129 \pm 9$  & \textbf{0.0} & $-141 \pm 9$  & $38.2 \pm 6.6$  & $-129 \pm 11$ & \textbf{0.0} & $-151 \pm 14$ & $26.5 \pm 6.5$ & $-138 \pm 11$ & \textbf{0.0}  \\ 
\hline
\multirow{3}{*}{Adversary} & 4  & $-25 \pm 3$   & \textbf{0.0} & $-30 \pm 4$   & $1.1 \pm 0.8$   & $-26 \pm 3$   & \textbf{0.0} & $-26 \pm 3$   & $0.9 \pm 1.3$ & $-23 \pm 3$ & \textbf{0.0}    \\
                           & 8  & $-1 \pm 3 $   & \textbf{0.0} & $-5 \pm 8$    & $6.8 \pm 2.0$   & $ -2 \pm 4$   & \textbf{0.0} & $-11 \pm 16$  & $5.5 \pm 1.7$ & $-2 \pm 5$ & \textbf{0.0}    \\
                           & 12 & $21 \pm 9$    & \textbf{0.0} & $16 \pm 8$    & $43.5 \pm 10.6$ & $23 \pm 9$    & \textbf{0.0} & $13 \pm 11$   & $33.2 \pm 10.9$ & $18 \pm 8$ & \textbf{0.0}  \\
\bottomrule
\end{tabular}
\caption{Results comparing the average rewards and collisions of algorithms during the testing phase. In the table, $a\pm b$ denotes the mean and variance of results from 10 independent testing runs. (\textit{REW, COL} denote cumulative rewards and collisions).}
\label{table:mpe-rew-col}

\end{table*}

Figure~\ref{fig:exp_mpe} shows two tasks from the modified MPE~\cite{mpe}, say \textit{simple spread} and \textit{simple adversary}. The goal of \textit{simple spread} task is for agents to cooperate and reach their target while avoiding collisions. The goal of \textit{simple adversary} task is for good agents to navigate to the target and trick the adversary, and the adversary agents try to reach the target while avoiding collisions. These tasks are more difficult than the \textit{gridworld} in two aspects:
\begin{enumerate}
    \item The state space of MPE is continuous and unbounded.
    \item Agents have more complicated dynamics in MPE, such as momentum and acceleration.
\end{enumerate}
The action set is $\mathcal{A}^{'} = \{stay, up, down, left, right, \textbf{brake}\},$ from which the action controls acceleration. For example, if an agent takes $stay$, it moves at its original velocity instead of staying. We use $brake$ to simulate braking in the real world, where the agent exerts a large deceleration in the direction of velocity until it stops. The $brake$ action obeys the law of kinematics; for example, an agent moving at a higher speed needs a longer distance to brake down. Each agent receives a reward that is inversely proportional to the distance with its target and penalties for collisions.

\begin{figure}[htp]
    \centering
    \includegraphics[width=\linewidth]{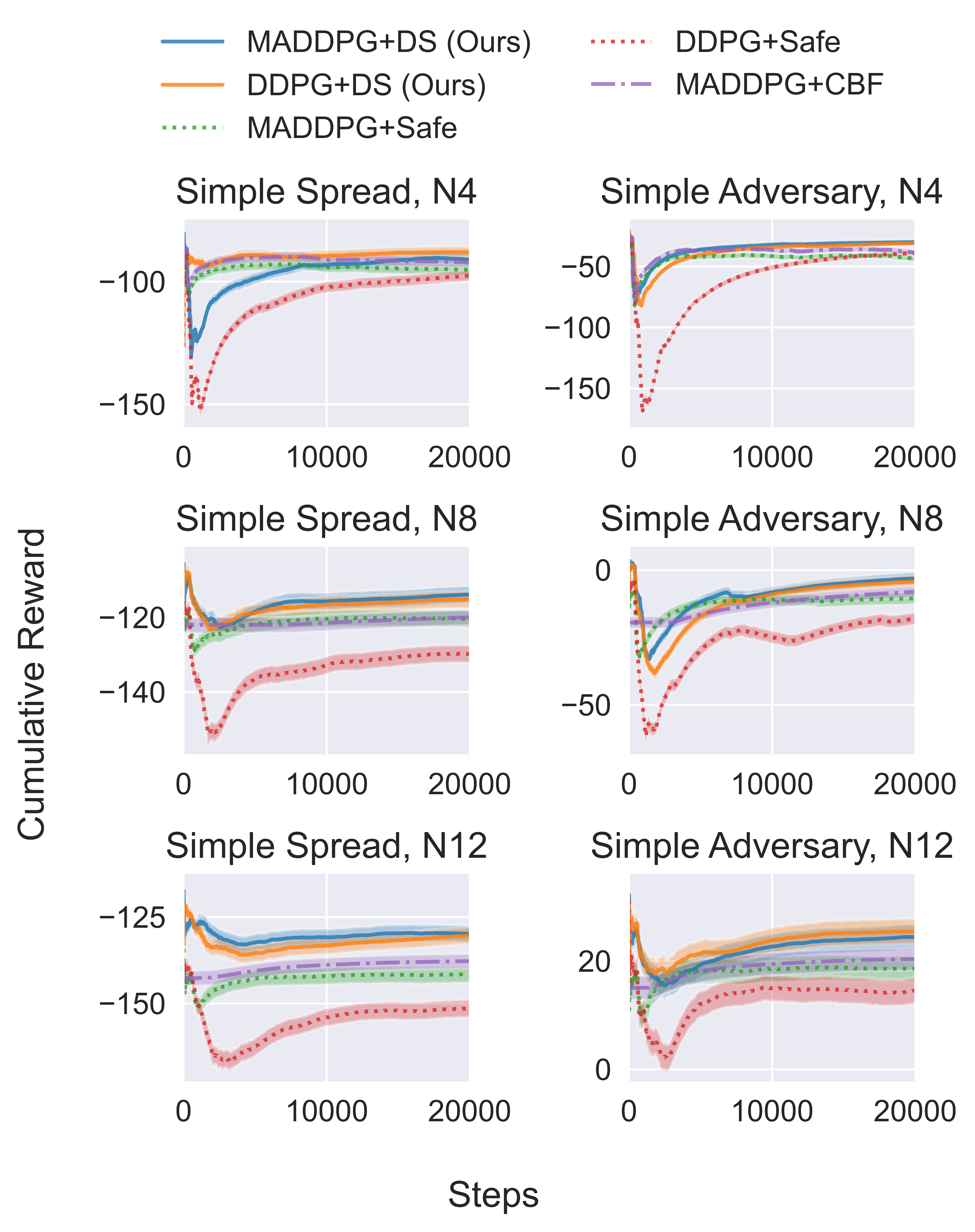}
\caption{Compare Dynamic Shielding (MADDPG/DDPG+DS) with other baselines (MADDPG/DDPG+Safe, MADDPG+CBF).}
    \label{fig:exp-mpe-rew}
\end{figure}

\textbf{Conservative Behavior Evaluation}. We integrate CQ-learning with factored shielding and proposed dynamic shielding. We apply them to four different \textit{gridworld} environments (Figure~\ref{fig:exp_grid}). We evaluate algorithms using maximum rewards, collision counts, and episode steps during the training phase. Results in Figure~\ref{fig:exp-grid-rew-col} show that both factored shielding and dynamic shielding can guarantee collision-free learning in all maps. 
However, dynamic shielding obtains better policies with higher rewards compared to factored shielding and learning without shielding. 
Figure~\ref{fig:exp-grid-step} shows CQ+DS agents need fewer steps to reach the target than CQ+FS. The dynamic shielding policy eventually has comparable performance as CQ-learning without intervention. Therefore, that demonstrates the proposed dynamic shielding mitigates coordination overheads caused by factored shielding with the guarantee of safety.

\textbf{Scalability Evaluation}. We evaluate the performance of the dynamic shielding when the state space and the number of agents scale up. We integrate DDPG and MADDPG with dynamic shielding and apply them to the modified MPE environment (shown in Figure~\ref{fig:exp_mpe}). The state space of MPE is a scale-up of gridworld as we described previously. Factored shielding fails in this unbounded environment since we cannot synthesize shields for the entire state space beforehand. 
We consider two baseline algorithms that incorporate safety mechanisms into DDPG/MADDPG: DDPG/MADDPG+Safe, which adds safety rewards to the reward function, and MADDPG+CBF, which enforces safety using control barrier functions. For MADDPG+CBF, we follow the barrier functions proposed in~\cite{cai2021safe}.
Table~\ref{table:mpe-rew-col} shows MADDPG/DDPG+DS and MADDPG+CBF guarantee collision-free regardless of the tasks. Whereas, MADDPG/DDPG+Safe constantly have collisions, which increases as the number of agents scales up. Table~\ref{table:mpe-rew-col} also depicts the cumulative rewards during the testing phase. At convergence, MADDPG/DDPG+DS obtains higher rewards than MADDPG/DDPG+Safe. 
The learning curves in Figure~\ref{fig:exp-mpe-rew} demonstrate that the MARL algorithms with dynamic shielding converge faster at higher rewards. 
We also evaluate the performance of dynamic shielding when the number of agents scales up. Table~\ref{table:mpe-rew-col} shows that although the collision of MADDPG/DDPG+Safe increase as agents scales up, MADDPG+DS and MADDPG+CBF always ensures safety and has higher rewards in both environments. 
As the number of agents increases, we observe a widening performance gap between MADDPG+DS and MADDPG+CBF. This suggests that our dynamic shielding still maintains minimal intervention (less conservative) as the number of agents scales up.

\begin{figure}[htp]
    \centering
    \includegraphics[width=0.9\linewidth]{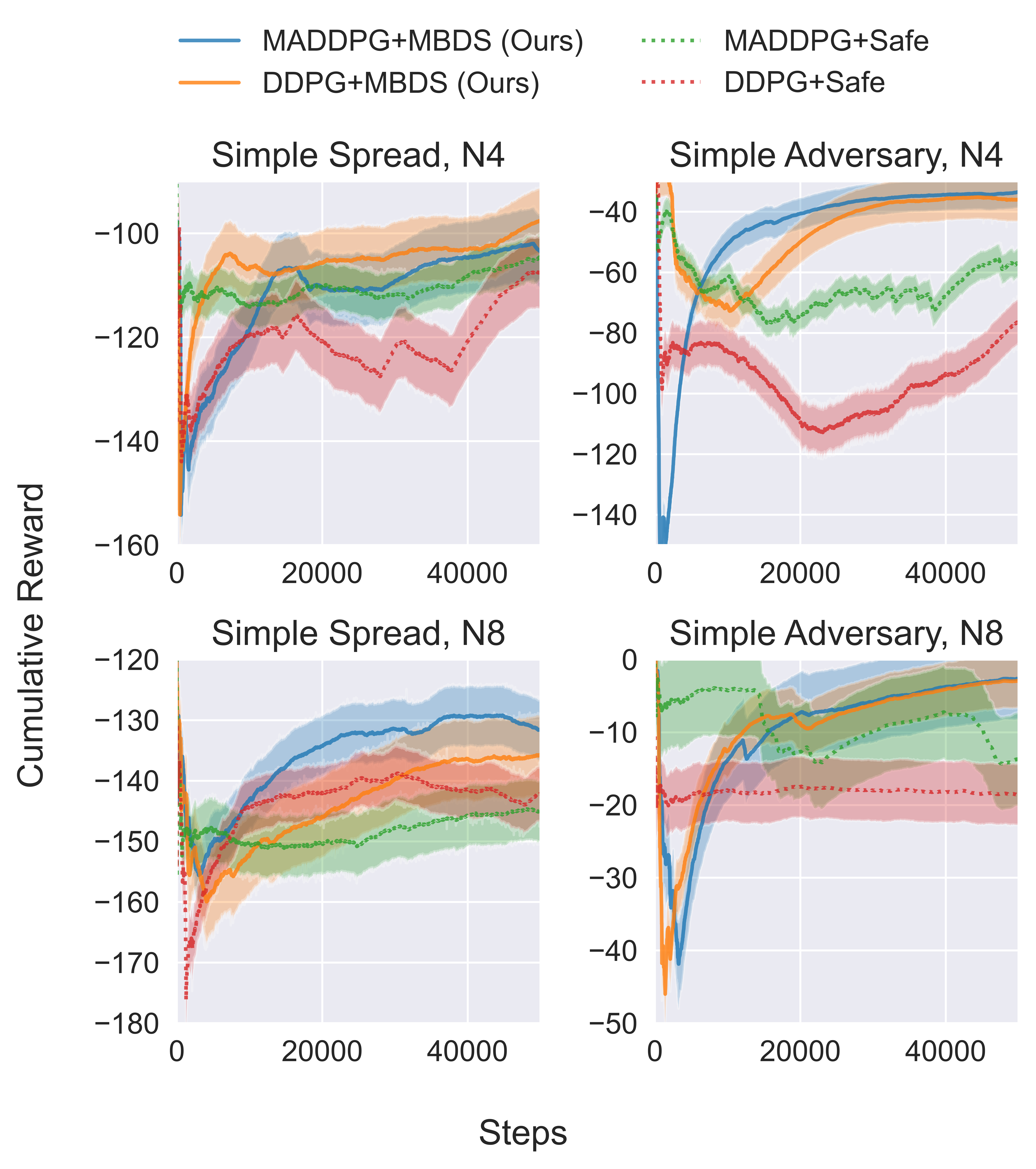}
\caption{Compare Model-Based Dynamic Shielding (MBDS) with other baselines.}
    \label{fig:exp-mb-rew}
\end{figure}

\textbf{Model Based Dynamic Shielding.} In this experiment, we remove the $brake$ action in the environment and evaluate algorithms in the standard MPE \textit{Simple spread} and \textit{Simple Adversary} environments. 
Agents collect $3e5$ roll-outs from the environment to train the world model via Algorithm~\ref{algo:world_model} to learn to predict next step location $x_{t+1}$ based on current location, velocity, and action $\left[x_t, v_t, a_t\right]^T$. Since the temporal information is irrelevant to the dynamics in this environment, we use a $ 32 \times 64 \times 32$ MLP network as the dynamics model. We name this procedure Model-Based Dynamic Shielding (MBDS). We calculate testing phase safety rate by $$r_{safety} =\frac{\sum_i \mathbbm{1}{ (\text{collisions in step i > 0}} )}{\text{number of steps}}.$$ Figure~\ref{fig:exp-mb-rew} demonstrates that MARL with Model-Based Dynamic Shielding (MADDPG+MBDS, DDPG+MBDS) obtains at least not lower cumulative rewards than other baselines (MADDPG+Safe, DDPG+Safe). In addition, when the number of agents increases from 4 to 8, the safety rates of shielding (shown in Figure~\ref{fig:exp-mb-safe}) decrease no more than $5\%$ and keep above $90\%$. However, other baselines has safety rates decrease $10\%$ in \textit{simple spread} and $20\%$ in \textit{simple adversary}. Hence, Model-Based Dynamic Shielding is an effective method to provide safety guarantees for MARL.

\begin{figure}[h]
\centering

\begin{subfigure}[t]{0.7\linewidth}
    \centering
    \includegraphics[width=\textwidth]{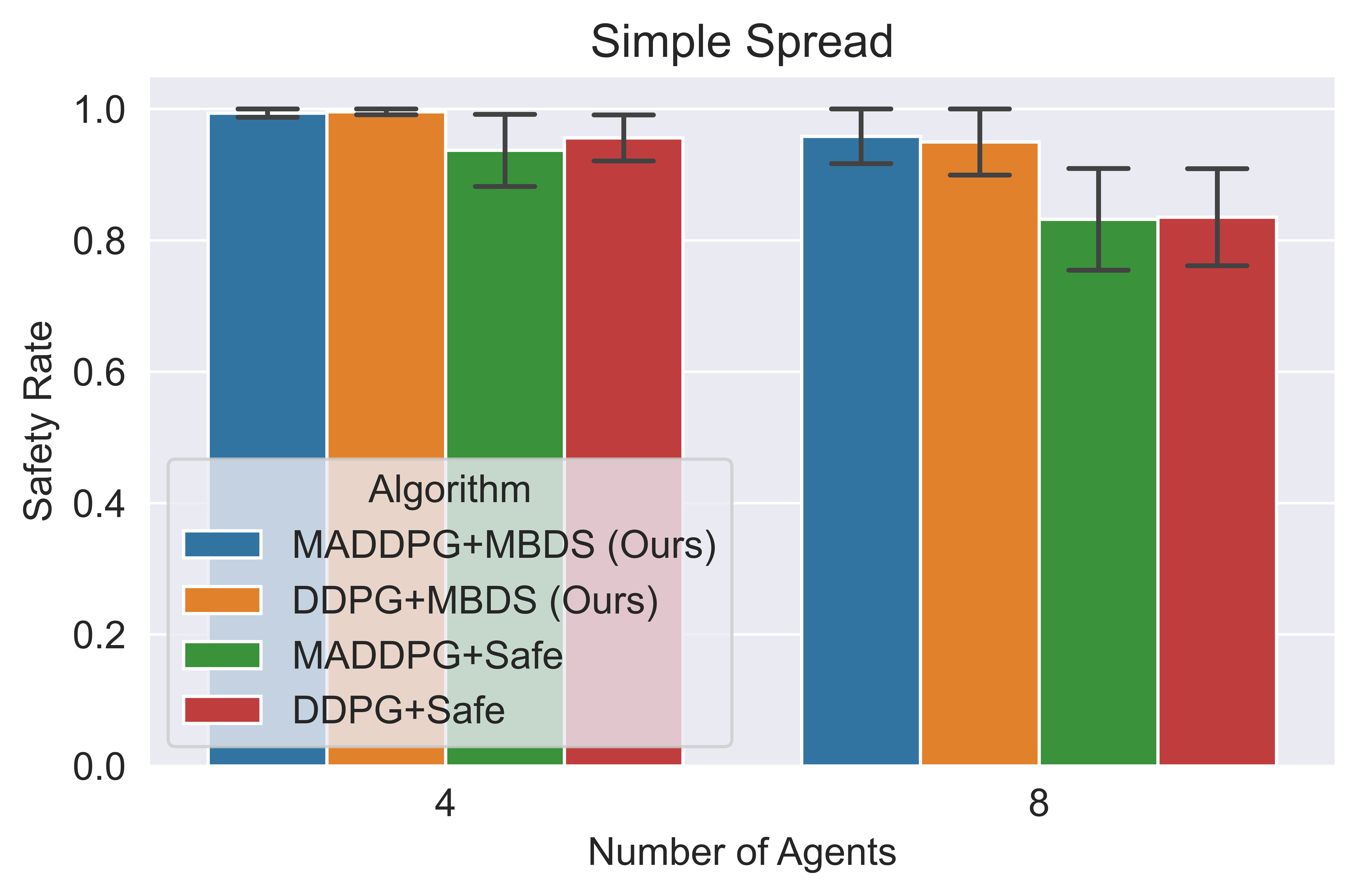}
\end{subfigure}
\\ 
\begin{subfigure}[t]{0.7\columnwidth}
    \centering
    \includegraphics[width=\textwidth]{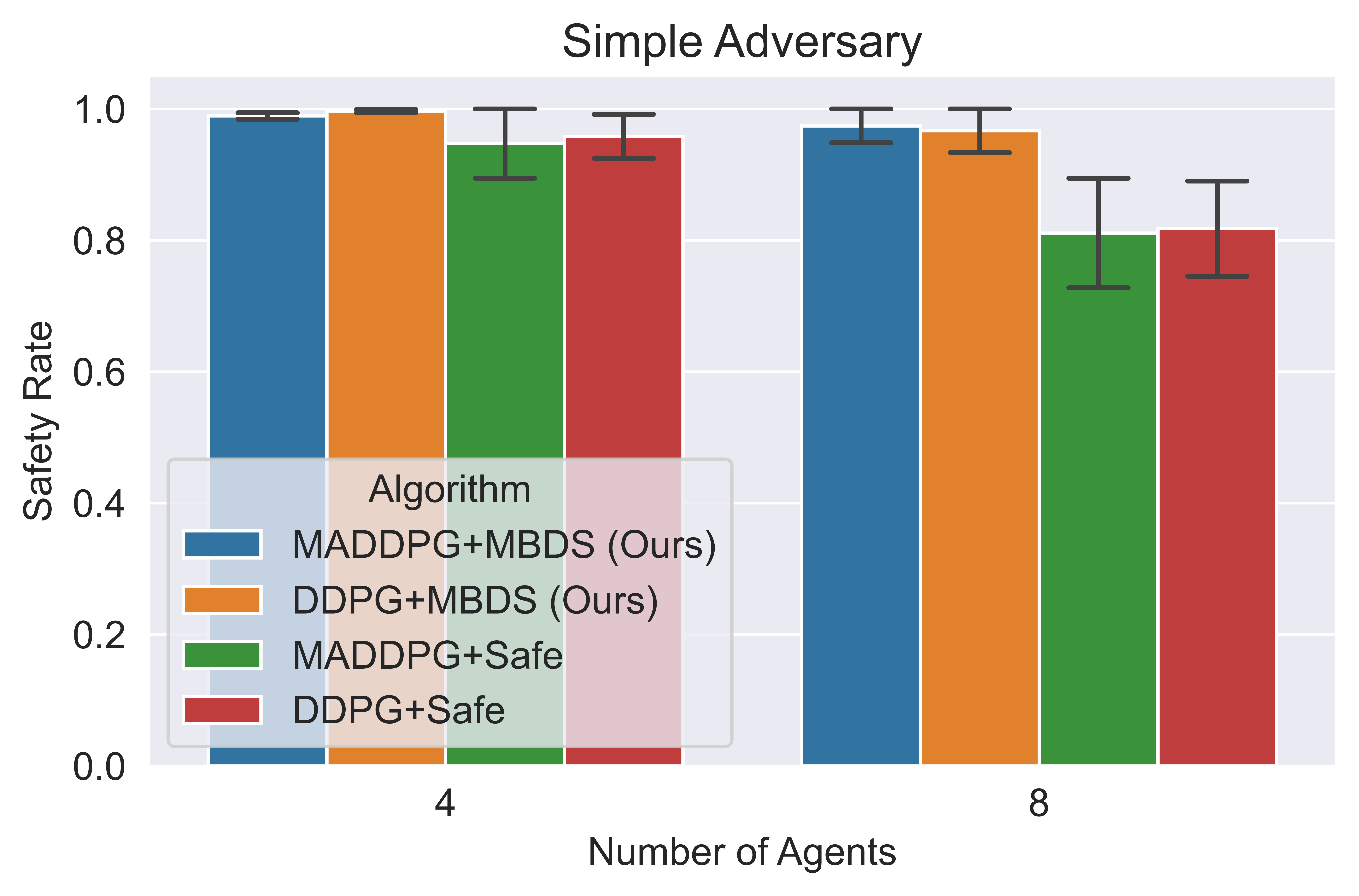}
    
\end{subfigure}

\caption{Safety rate in the MPE tasks.}
\label{fig:exp-mb-safe}
\end{figure}

\section{CONCLUSIONS}
This paper presents a novel approach to addressing safe MARL through model-based dynamic shielding. The proposed method minimally interferes with the MARL framework to ensure the safety specification defined by LTL expressions. We also propose an effective technique to synthesize shields in real time and provide theoretical proof of a safety guarantee. In addition, we conduct extensive experiments to demonstrate our algorithm is better than other baselines regarding safety and learning performance in benchmark tasks. 
There are some limitations that we acknowledge. First, our approach does not address the issue of cascading errors that may arise from inaccuracies in the learned dynamics model. This could potentially impair the safety guarantees provided by our framework. Second, we have not analyze the time complexity of our algorithm, which is heavily dependent on the implementation of the reactive game solver.
In our future work, we plan to explore methods for enforcing safety guarantees in the presence of a risk-aware dynamics model, which would help to mitigate the impact of cascading errors. Additionally, we will conduct a thorough analysis of the time complexity of our algorithm to ensure that it can scale to larger and more complex environments.

\clearpage
\begin{acks}
We thank Ingy ElSayed-Aly and Daniel Melcer for the insightful discussion about this work. The work was conducted as part of an undergraduate research experience with the Carnegie Mellon University Robotics Institute Summer Scholars Program. The scholarship funding for Wenli Xiao was provided by the Shenzhen Institute of Artificial Intelligence and Robotics for Society.
\end{acks}

\bibliographystyle{ACM-Reference-Format} 
\bibliography{sample}


\end{document}